\newtheorem{theorem}{Theorem}
\newtheorem{corollary}[theorem]{Corollary}
\newtheorem{property}{Property}
\newtheorem{remark}{Remark}
\DeclareMathOperator*{\argmin}{arg\,min}
\newcommand{\A}{\mathcal{A}}
\def\BibTeX{{\rm B\kern-.05em{\sc i\kern-.025em b}\kern-.08em
    T\kern-.1667em\lower.7ex\hbox{E}\kern-.125emX}}
\begin{document}

\title{Safe Screening Rules for Group OWL Models}

\author{
    \IEEEauthorblockN{Runxue Bao$^{1*}$\thanks{*Corresponding Author: Runxue Bao, Email: runxue.bao@pitt.edu
},
    Quanchao Lu$^{2}$,
    Yanfu Zhang$^{3}$}
    
    \IEEEauthorblockA{$^{1}$Department of Electrical and Computer Engineering, University of Pittsburgh, Pittsburgh, PA, United States}
    
    \IEEEauthorblockA{$^{2}$School of Computer Science, Georgia Institute of Technology, Atlanta, GA, United States}
    \IEEEauthorblockA{$^{3}$Department of Computer Science, the College of William and Mary, Williamsburg, VA, United States}

    \IEEEauthorblockA{
        runxue.bao@pitt.edu,
        qlu43@gatech.edu,
        yzhang105@wm.edu
    }
}
\maketitle

\begin{abstract}
Group Ordered Weighted $L_{1}$-Norm (Group OWL) regularized models have emerged as a useful procedure for high-dimensional sparse multi-task learning with correlated features. Proximal gradient methods are used as standard approaches to solving Group OWL models. However, Group OWL models usually suffer huge computational costs and memory usage when the feature size is large in the high-dimensional scenario. To address this challenge, in this paper, we are the first to propose the safe screening rule for Group OWL  models by effectively tackling the structured non-separable penalty, which can quickly identify the inactive features that have zero coefficients across all the tasks. Thus, by removing the inactive features during the training process, we may achieve substantial computational gain and memory savings. More importantly, the proposed screening rule can be directly integrated with the existing solvers both in the batch and stochastic settings. Theoretically, we prove our screening rule is safe and also can be safely applied to the existing iterative optimization algorithms.  Our experimental results demonstrate that our screening rule can effectively identify the inactive features and leads to a significant computational speedup without any loss of accuracy. 
\end{abstract}

\begin{IEEEkeywords}
safe screening rules, Group OWL models, feature selection.
\end{IEEEkeywords}

\section{Introduction}
Multi-task learning \cite{ben2003exploiting, micchelli2005kernels, maurer2016benefit, lin2019pareto} has received great attention in machine learning recently, which aims to learn the shared knowledge among multiple related tasks. Empirical and theoretical studies \cite{evgeniou2004regularized,zhou2011malsar,misra2016cross,sener2018multi} have shown regularized multi-task learning, which allows knowledge transfer between these tasks by encoding task relatedness with different regularization terms, can improve the generalization performance of learning tasks. Multi-task 
Lasso \cite{argyriou2007multi,lounici2009taking,obozinski2011support,lounici2011oracle} encourages all the tasks to select a common set of nonzero features by the $L_{2,1}$ norm. 
Trace norm is used to constrain the models from different tasks to share a low-dimensional subspace\cite{abernethy2009new}. However, correlated features widely exist in high-dimensional sparse learning. These models tend to arbitrarily select only one of the highly correlated features and thus can be unstable and difficult to interpret.

To effectively handle correlated features in multi-task learning, a new family of regularizers, termed Group Ordered Weight $L_{1}$-Norms \cite{oswal2016representational}, has emerged as a useful procedure for high-dimensional sparse regression recently, which can identify precise grouping structures of strongly correlated covariates automatically during the learning process without any prior information of feature groups.  Explicitly, the Euclidean norm of the features in the Group OWL penalty are sorted in a non-increasing order and the corresponding weights are also non-increasing.  The Group OWL regularizer encourages the coefficients associated with strongly correlated features to be equal, which is helpful to denoise features and improve the predictions\cite{oswal2016representational} . Owing to its effectiveness, Group OWL models are widely used in various kinds of applications, e.g., matrix recovery \cite{zeng2014group}, fMRI analysis \cite{oswal2016representational}, and neural networks training \cite{zhang2018learning}.

Although the Group OWL penalty is an important regularizer that can find the grouping structures automatically, the computational burden and memory usage with the structured non-separable penalty can be very heavy, especially in the high-dimensional setting. Thus, how to efficiently solve the Group OWL models is still an important topic to study. By exploiting the sparsity of the solution, the screening rule first introduced by \cite{Laurent2012safe} is a promising approach to eliminating inactive features whose coefficients are proved to be zero at the optimum and further speeding up the model training.  

Recently, many screening rules have been proposed to accelerate sparse learning models in single-task learning. The static safe screening rules in \cite{Laurent2012safe} for generalized $L_{1}$ problems eliminate features only once before any optimization algorithm. Relaxing the safe rule, strong rules \cite{tibshirani2012strong}, use heuristic strategies at the price of possible mistakes, which requires difficult post-processing to check for features possibly wrongly discarded. Another road to screening method is called sequential safe rules \cite{wang2013lasso,xiang2016screening}, which relies on the exact dual optimal solution and thus could be very time-consuming and lead to unsafe solutions in practice. More recently, many dynamic screening rules, which are conducted during the whole learning process based on the duality gap, are proposed in \cite{fercoq2015mind,shibagaki2016simultaneous,ndiaye2016gap,rakotomamonjy2019screening,bao2020fast,bao2022accelerated,ijcai2022p266} for a broad class of problems with both good empirical and theoretical results.

In the multi-task learning setting, dynamic screening rules are proposed for the generalized sparsity-enforcing models with the $L_{2,1}$ norm \cite{ndiaye2015gap,ndiaye2017gap}. The screening rule in \cite{wang2015safe} is developed for $L_{2,1}$ norm regularized multi-tasking learning with multiple data matrices. \cite{zhou2015safe} introduces a safe subspace screening rule for the multi-task problems with the nuclear norm. It shows that safe screening rules have been proposed to accelerate training algorithms for many sparse learning models by screening inactive features while all of them are limited to either single-task learning settings or the separable matrix norm in multi-task learning. So far there are still no safe screening rules proposed for Group OWL models. This is because of the over-complex Group OWL regularizer, which is non-separable and structured, and thus all the hyperparameters for the variables in each row of the Group OWL regularizer are unfixed until we finish the whole learning process. Due to the challenges above in tackling the structured non-separable Group OWL regularizer with numerous unfixed hyperparameters in multi-task learning, speeding up Group OWL models via screening rules is still an open and challenging problem.

To address these challenges, in this paper, we propose a safe dynamic screening rule for the family of Group OWL regularized models, which can be specialized to the popular multi-task regression and multinomial logistic regression. To the best of our knowledge, this work is the first attempt in this direction. Specifically, our screening rule is doubly dynamic with a decreasing left term by tracking the intermediate duality gap and an increasing right term by exploring the order of the primal solution with the unknown order structure via an iterative strategy. In high-dimensional multi-task learning, superfluous features with zero coefficients widely exist and thus our screening rule can effectively identify the inactive features in each iteration and then accelerate the original algorithms by eliminating these useless features. More importantly, the proposed screening rule can be easily incorporated into any existing standard iterative optimization algorithms since it is independent of the solver. Theoretically, we rigorously prove that our screening rule is safe when applied to any iterative optimization algorithms. The empirical performance on eight real-world benchmark datasets shows our algorithms can lead to significant computational gain, compared to standard proximal gradient descent methods.

\section{Group OWL Regularized Models}
Let $X = [x_{1},x_{2},\ldots,x_{n}]^\top \in \mathbb{R}^{n \times d}$ denotes the design matrix with $n$ observations and $d$ variables, $
Y \in \mathbb{R}^{n \times q}$ denotes the measurement matrix with $q$ tasks or classes,  $B \in \mathbb{R}^{d \times q}$ denotes the unknown coefficient matrix and $\Theta \in \mathbb{R}^{n \times q} $ denotes the solution of the dual problem, $\lambda = [\lambda_{1}, \lambda_{2}, \ldots, \lambda_{d}]$ is a non-negative regularization parameter vector of $d$ non-increasing weights that $\lambda_1 \geq \lambda_2 \geq \ldots \geq \lambda_d$, we consider Group OWL models by solving the minimization problem as follows:
\begin{eqnarray} \label{primal}
 {B^*} \in  \argmin\limits_{B \in \mathbb{R}^{d \times q} }  P_{\lambda}(B) := \underbrace{\sum_{i=1}^n f_i(x_i^\top B)}_{F(B)} + \underbrace{\sum_{i=1}^d\lambda_{i}\|B_{[i],:}\|_{2}}_{J_{\lambda}(B)},
\end{eqnarray}
where loss $F(B) = \sum_{i=1}^n f_i(x_i^\top B)$, $f_i: \mathbb{R}^{q}\rightarrow \mathbb{R}_+$ can be the squared loss for multi-task regression and the logistic loss for multi-class classification tasks with $\frac{1}{L}$-Lipschitz gradients, $J_{\lambda}(B) = \sum_{i=1}^d\lambda_{i}\|B_{[i],:}\|_{2}$ is the Group OWL penalty,   $\|B_{[1],:}\|_{2} \geq \|B_{[2],:}\|_{2}  \geq \cdots \geq \|B_{[d],:}\|_{2}  $ are the ordered terms. Each term has a corresponding regularization parameter. In general,  (\ref{primal}) is convex, non-smooth, and structured non-separable. 

Similar to the OWL regularizer \cite{lgorzata2013statistical, bao2019efficient}, the Group OWL regularizer penalizes the coefficients according to the magnitude of the $l_{2}$-norm: the larger the magnitude, the larger the penalty. Group OWL regression has been shown to outperform $L_{2,1}$-norm regularized models when the correlated features widely exist in the high-dimensional setting \cite{oswal2016representational}. It provides theoretical analyses that Group OWL regression can simultaneously encourage sparsity and grouping properties for sparse regression in multi-task feature learning with strongly correlated features. Note that the Group OWL models are a set of sparse multi-task learning models. For an illustration, we present the formation of Group OWL Regression and Multinomial OWL Regression.
\paragraph{Group OWL Regression}
For the multi-task regression task, loss $F(B)$ is defined as $F(B) :=\frac{1}{2}\|Y-X B\|^2_{2}$ and each $f_i$ is defined as $f_i(B) :=\frac{1}{2}\|Y_{i,:}-x_i^T B\|^2_{2}$, Group OWL regression is formulated as:
\begin{eqnarray} \label{groupowlregression}
    \min\limits_{B\in \mathbb{R}^{d \times q}}  P_{\lambda}(B):=\frac{1}{2}\|Y-XB\|^2_{2} +\sum_{i=1}^d\lambda_{i}\|B_{[i],:}\|_{2}.
\end{eqnarray}
2OSCAR \cite{zeng2014group} is a special case of Group OWL regression if $\lambda_{i} = \alpha_{1} + \alpha_{2}(d - i)$ for $i=1,2,\cdots,d$, where $\alpha_{1}$ and $\alpha_{2}$ are non-negative parameters. 

\paragraph{Multinomial OWL Regression}
For the multi-class classification, by defining loss $F(B)$ as multinomial logistic loss, multinomial OWL regression is formulated as follows:
\begin{eqnarray} \label{multilogisticregression}
\min\limits_{B\in \mathbb{R}^{d \times q}}  P_{\lambda}(B) \!\!\!\! &:= &\!\!\!\!\sum_{i=1}^n (\sum_{j=1}^q -Y_{i,j}x_i^\top B_{:,j} + \log(\sum_{j=1}^q\exp(x_i^\top B_{:,j}))) \nonumber\\
\!\!\!\!&&\!\!\!\! + \sum_{i=1}^d\lambda_{i}\|B_{[i],:}\|_{2}.
\end{eqnarray}

\begin{remark}
In practice, Group OWL models often involve high-dimensional data, making the aforementioned optimization algorithms computationally expensive and memory-intensive for large feature sizes $d$. Therefore, accelerating model training through screening techniques is both essential and promising.
\end{remark}

\section{Proposed Screening Rule}
In this section, we first derive the dual formulation and then propose  safe screening rules for  Group OWL models
by exploiting the properties of the screening test.

\subsection{Dual and Screening Test}
In this subsection, we derive the screening test for Group OWL models for $i = 1,2, \ldots, d$ as 
\begin{eqnarray} 
\|x_i^\top \Theta^* \|_2 < \lambda_{|\A^*|} \Longrightarrow B_{i,:}^* = 0,
\end{eqnarray}
where $B^{*}$ and $\Theta^*$ are the optimum of the primal and the dual respectively, $\A^*$ is the active set at the optimal with nonzero coefficients and $|\A^*|$ is the size of set $\A^*$.

To derive the screening test above, we provide the dual formulation for the primal objective (\ref{primal}) of Group OWL models. Following the derivation of $L_{2, 1}$-norm regularized regression in \cite{ndiaye2015gap,ndiaye2017gap}, we can derive the dual of  (\ref{primal}) as follows:
\begin{subequations}
\begin{small}
\begin{eqnarray} 
\!\!&  &\!\! \min\limits_{B} F(B)+J_{\lambda}(B) \label{yy} \\
\!\!& = &\!\!  \min\limits_{B}  \sum_{i=1}^{n} f_{i}(x_{i}^\top B) +\sum_{i=1}^d\lambda_{i}\|B_{[i],:}\|_{2} \nonumber \\
\!\!& = &\!\! \!\!\!\!\ \min\limits_{B}  \sum_{i=1}^{n} f^{\star\star}_{i}(x_{i}^\top B) +\sum_{i=1}^d\lambda_{i}\|B_{[i],:}\|_{2} \nonumber \\
\!\!& = &\!\! \min\limits_{B} \sum_{i=1}^{n} \max\limits_{\Theta_{i,:}} [\Theta_{i,:}(B^\top x_{i})-f^{\star}_{i}(\Theta_{i,:})] +\sum_{i=1}^d\lambda_{i}\|B_{[i],:}\|_{2} \nonumber \\
\!\!& = &\!\! \min\limits_{B} \max\limits_{\Theta} - \sum_{i=1}^{n} f^{\star}_{i}(\Theta_{i,:}) + \mathrm{tr}(\Theta B^\top X^\top) +\sum_{i=1}^d\lambda_{i}\|B_{[i],:}\|_{2} \nonumber \\
\!\!& = &\!\! \max\limits_{\Theta} - \sum_{i=1}^{n} f^{\star}_{i}(\Theta_{i,:}) + \min\limits_{B} \sum_{i=1}^{d} ( x_{i}^\top\Theta B_{i,:}^\top + \lambda_{i}\|B_{[i],:}\|_{2} )
         \label{maximization} \\
\!\!& = &\!\! \max\limits_{\Theta \in  \Delta}  D(\Theta) := 
 \sum_{i=1}^{n} - f^{\star}_{i}(\Theta_{i,:})  \label{dual} 
\end{eqnarray}
\end{small}
\end{subequations}
where $\mathrm{tr}$ is the trace of the matrix. Note that $f^{\star}_{i}$ is the convex conjugate of function $f_{i}$ as:
$
f^{\star}_{i}(\Theta_{i,:}) = \max\limits_{z_{i}} z_{i}\Theta_{i,:}^\top -f_{i}(z_{i}).$
Denote $\widetilde{\Theta}:= \big(\|x_1^\top \Theta\|_2, \|x_2^\top \Theta\|_2,\ldots, \|x_d^\top \Theta\|_2\big)^\mathsf{T}$, the constraint $\Theta \in  \Delta$ in (\ref{dual}) is $\sum_{j \leq i} \widetilde{\Theta}_{[j]} \leq \sum_{j \leq i} \lambda_j $ for all $i = 1, \ldots, d$. The last step to obtain the dual uses the optimality conditions of the minimization part in the penultimate formula of (\ref{maximization}) as: 
\begin{eqnarray} \label{optimality_origin}
\min\limits_{B} \sum_{i=1}^{d} x_{i}^\top\Theta B_{i,:}^\top + \lambda_{i}\|B_{[i],:}\|_{2},
\end{eqnarray}
which can be transformed as the constraints in (\ref{dual}) and completes the derivation of the dual formulation of (\ref{primal}).

Based on the Fermat rule \cite{bauschke2011convex}, we have 
\begin{eqnarray}  \label{subdiff}
    -X^\top \Theta^* \in \partial J_{\lambda}(B^*), 
\end{eqnarray}
where $\partial J_{\lambda}(B^*)$ is the subdifferential of $J_{\lambda}(B^*)$. 

Denote the active set and inactive set  at the optimal as $\A^*$ and $\widetilde{\A}^*$ respectively, we can derive the corresponding results for Problem (\ref{optimality_origin}) at partition $\A^*$ and  $\widetilde{\A}^*$  as:
\begin{eqnarray} \label{constraint2}
    -X_{\A^*,:}^\top\Theta^{*}   \in \partial J_{\lambda_{\A^*}}(B^*) ,   \\
      -X_{\widetilde{\A}^*,:}^\top\Theta^{*}   \in \partial J_{\lambda_{\widetilde{\A}^*}}(B^*).
\end{eqnarray}
For any $i \in \A^*$, we have $B_{i,:}^* \neq 0$ and 
\begin{eqnarray}  \label{optimal}
     \|x_i^\top \Theta^* \|_2  \in [\min_{j\in  \A^*}\lambda_j, \max_{j\in \A^*}\lambda_j]. 
\end{eqnarray}
Thus, suppose the optimum primal and dual solutions are known, note $\lambda = [\lambda_{1}, \lambda_{2}, \ldots, \lambda_{d}]$ is a non-negative vector that $\lambda_1 \geq \lambda_2 \geq \ldots \geq \lambda_d$, we can derive the screening condition for each variable as: 
\begin{eqnarray} \label{condition}
    \|x_i^\top \Theta^* \|_2 < \lambda_{|\A^*|} \Longrightarrow B_{i,:}^* = 0,
\label{condition00}
\end{eqnarray}
to identify the variables with zero coefficients. Then, in the latter training process, we can train the model with fewer parameters and variables without any loss of accuracy, which would speed up the training process. 

However, the main challenge is that the screening conditions (\ref{condition00}) require the dual optimum and the order structure of the primal optimum to be known in advance,  which will not happen until we finish the entire training process in practice. Thus, the optimization cannot benefit from the screening conditions above during the training process. 

Hence, the aim of our screening rule is to screen as many inactive variables whose coefficients should be zero as possible based on the screening test (\ref{condition00}) with the unknown dual optimum and the unknown order structure of the primal optimum during the optimization process. To achieve this, we can design safe screening rules by constructing a screening region as large as possible with smaller left lower bounds and larger upper bounds.

\subsection{Upper Bound for the Left Term}
Note the lower bound of the screening region is the upper bound of $\|x_i^\top \Theta^* \|_{2}$. Hence, in this part,  we derive a tight upper bound for $\|x_i^\top \Theta^* \|_{2}$ as
\begin{eqnarray} \label{upperbound}
 \|x_i^\top \Theta \|_{2}   +   \|x_i\|\sqrt{2G(B, \Theta)/L},
\end{eqnarray}
by tracking the intermediate duality gap $G(B, \Theta)$ at each iteration of the training process.

The derivations can be shown as follows. Considering triangle inequality, we have
\begin{eqnarray} \label{triangle}
\|x_i^\top \Theta^* \|_{2}\leq  \|x_i^\top \Theta \|_{2} +  \|x_i \|\|\Theta - \Theta^*\|_F. 
\end{eqnarray}

Note that each $f_i^*(\Theta_{i,:})$ in the dual is strongly convex (see Proposition 3.2 in\cite{johnson2015blitz}), we have Property \ref{property1}.
\begin{property} \label{property1}
Dual $D(\Theta):= \sum_{i=1}^{n} - f^{*}_{i}(\Theta_{i,:})$ is strongly concave \emph{w.r.t.} $\Theta$. Hence, we have
\begin{small}
\begin{eqnarray} \label{concave}
D(\Theta) \leq D(\Theta^*)- \mathrm{tr}(\nabla D(\Theta^*)^\top(\Theta^* - \Theta))   - \frac{L}{2}\| \Theta - \Theta^* \|^2_{F}.
\end{eqnarray}
\end{small}
\end{property}

Considering Property \ref{property1}, we can further bound the distance $\| \Theta - \Theta^* \|_F$ between the intermediate solution and the optimum of the dual based on the first-order optimality condition of constrained optimization in  Corollary \ref{corollary1}.

\begin{corollary} \label{corollary1}
Let $\Theta$ be any feasible dual, we have:
\begin{eqnarray}
    \| \Theta - \Theta^* \|_F \leq \sqrt{2G(B, \Theta)/L},
\end{eqnarray}
where $G(B, \Theta) = P(B) - D(\Theta)$ is the intermediate duality gap during the training process.
\end{corollary}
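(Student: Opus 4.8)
The plan is to combine the strong concavity of the dual from Property~\ref{property1} with the first-order optimality condition satisfied by the constrained maximizer $\Theta^*$, and then to close the argument using weak duality. The square-root bound will then follow by a single rearrangement.

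First I would exploit the fact that $\Theta^*$ maximizes the concave objective $D$ over the convex feasible set $\Delta$. Since $\Theta \in \Delta$ is feasible and $\Delta$ is convex, the segment from $\Theta^*$ toward $\Theta$ stays feasible, so the directional derivative of $D$ at $\Theta^*$ in that direction cannot be positive; this is the variational inequality
\[
\mathrm{tr}(\nabla D(\Theta^*)^\top(\Theta - \Theta^*)) \leq 0,
\]
which is precisely $-\mathrm{tr}(\nabla D(\Theta^*)^\top(\Theta^* - \Theta)) \leq 0$. Substituting this into the strong concavity inequality (\ref{concave}) lets me discard the (nonpositive) linear term and obtain
\[
D(\Theta) \leq D(\Theta^*) - \frac{L}{2}\|\Theta - \Theta^*\|^2_F,
\]
that is, $\tfrac{L}{2}\|\Theta - \Theta^*\|^2_F \leq D(\Theta^*) - D(\Theta)$.

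Next I would bound the right-hand side by the duality gap. By weak duality, for any feasible primal $B$ and any feasible dual we have $D(\Theta^*) \leq P(B)$, so
\[
\frac{L}{2}\|\Theta - \Theta^*\|^2_F \leq D(\Theta^*) - D(\Theta) \leq P(B) - D(\Theta) = G(B,\Theta).
\]
Multiplying by $2/L$ and taking the square root yields $\|\Theta - \Theta^*\|_F \leq \sqrt{2G(B,\Theta)/L}$, which is the claim.

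The main subtlety is getting the sign right in the optimality step: because the feasible region $\Delta$ is defined by the cumulative constraints $\sum_{j\leq i}\widetilde{\Theta}_{[j]} \leq \sum_{j\leq i}\lambda_j$ rather than being unconstrained, one cannot simply set $\nabla D(\Theta^*)=0$; instead the variational inequality must point in the direction that makes the linear term in (\ref{concave}) nonpositive so that it can be dropped rather than retained. Once that orientation is fixed, the remaining steps are routine rearrangements together with the weak-duality inequality.
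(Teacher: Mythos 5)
Your proposal is correct and follows essentially the same route as the paper's proof: the variational inequality at the constrained maximizer $\Theta^*$ to drop the linear term in (\ref{concave}), then $P(B) \geq D(\Theta^*)$ to replace $D(\Theta^*)-D(\Theta)$ by the duality gap. Your explicit attention to the sign of the optimality condition, and your observation that weak duality suffices where the paper invokes strong duality, are both accurate refinements of the same argument.
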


\begin{proof}
Considering the first-order optimality condition for strongly concave dual $D(\Theta)$, we have:
\begin{eqnarray} 
    \mathrm{tr}(\nabla D(\Theta^*)^\top(\Theta^* - \Theta)) \geq 0.
\end{eqnarray}
Hence, based on (\ref{concave}), we have:
\begin{eqnarray} 
\|\Theta-\Theta^*\|_F\leq\sqrt{\frac{2(D(\Theta^*) - D(\Theta))}{L}}.
\end{eqnarray}
By strong duality, we have $P(B ) \geq D(\Theta^*)$ and thus obtain the following bound as:
\begin{eqnarray} 
    \| \Theta - \Theta^* \|_F  \leq \sqrt{\frac{2(P(B ) - D(\Theta))}{L}},
\end{eqnarray}
which completes the proof. 
\end{proof}

Hence, with the obtained upper bound  in Corollary \ref{corollary1}, we can substitute $\|\Theta - \Theta^* \|_F$ in the right term of (\ref{triangle})  and then derive the new screening test with the new upper bound  as:
\begin{small}
\begin{eqnarray} \label{test}
\|x_i^\top \Theta \|_{2}   +   \|x_i\|\sqrt{2G(B, \Theta)/L} < \lambda_{|\A^*|}  \Rightarrow B_{i,:}^* = 0. 
\end{eqnarray}
\end{small}
The intermediate duality gap can be computed by $B$ and $\Theta$. $B$ and $\Theta$ can be easily obtained in the original proximal gradient algorithms. 

As the duality gap becomes smaller during the training process, the upper bound of $\|x_i^\top \Theta^* \|_{2}$ becomes smaller, and thus the lower bound of the screening region continuously becomes smaller.

 \subsection{Lower bound for the Right Term}
On the other hand, the upper bound of the screening region is the lower bound of $\lambda_{|\A^*|} $. Hence, in this part, we derive a tight lower bound of the right term as $\lambda_{|\A|}$ where $\A$ is defined as the active set that has not been screened out during the training process and $|\A|$ is the size of set $\A$.

To derive the tight lower bound of the right term above with the numerous unfixed hyperparameters,  we design an efficient iterative strategy to handle the non-separable penalty by exploring the unknown order structure of the primal optimum in (\ref{condition00}). In general, our proposed screening rule can be shown as 
\begin{eqnarray} \label{screening}
\|x_i^\top \Theta \|_{2}   +   \|x_i \|\sqrt{2G(B, \Theta)/L} < \lambda_{|\A|} \Rightarrow B^*_{i,:} = 0.
\end{eqnarray}

At first, the size of $\A$ is $d$ and the screening test is performed on $\lambda_d$ as 
$
\|x_i^\top \Theta \|_{2}   +   \|x_i\|\sqrt{2G(B, \Theta)/L} < \lambda_d \Rightarrow B^*_{i,:} = 0. 
$
During the training process with $d_k$ active features, the size of $\A$ is $d_k$. Hence, we can assign an arbitrary permutation of $d-d_k$ smallest parameters $\lambda_{d_k+1}, \lambda_{d_k+2}, \ldots, \lambda_{d}$ to these screened coefficients without any influence to the final model. Thus, the order of these variables whose coefficients must be zero is known to be $d-d_k$ minimal absolute values of all. Correspondingly, the screening test is performed on $\lambda_{d_k}$ as 
$
\|x_i^\top \Theta \|_{2}   +   \|x_i\|\sqrt{2G(B, \Theta)/L} < \lambda_{d_k} \Rightarrow B^*_{i,:} = 0, 
$
to find new set $\A$ with $d'_{k}$ active variables where $d'_{k} \leq d_{k}$ and further derive the order of $d_{k}-d'_{k}$ screened variables by assigning the parameters similarly as above. 

At each iteration, we repeat the screening test to explore the order of primal optimum until the active set remains unchanged. Note, for each time, we only need to do the screening test on a hyperparameter. That is to say, we only need to do the screening test once each time, independent of the size of the hyperparameters and thus very efficient to handle numerous hyperparameters of Group OWL models. 

We do the screening test above during the whole training process. As the size of $\A$  becomes smaller, the hyperparameter for the screening test becomes larger since $\lambda$ is a parameter vector with non-increasing weights. Thus, the lower bound of $\lambda_{|\A^*|}$ becomes larger and the upper bound of the screening region continuously becomes larger.

With our dynamic strategy on the both lower bound and upper bounds of the screening test, the screening region is continuously growing from both directions during the training process. Thus, our screening rule is promising to effectively screen more inactive variables.

\section{Proximal Gradient Algorithms with Safe Screening Rules}
In this section, we first apply the proposed safe screening rules to the proximal gradient algorithms, such as the APGD algorithm in the batch setting and SPGD algorithm in the stochastic setting, for Group OWL  models and then provide the theoretical analyses for our safe screening rules.

\subsection{Proposed Algorithms}

For the APGD algorithm in the batch setting, we repeat doing the screening test and updating active set $\A$ first. Then, we set the step size $t_k = t_1$ if $\A$ is updated at this iteration. For the following part, the procedure is the same as the original APGD algorithm with the current set $\A$. The procedure of our algorithm in the batch setting is summarized in Algorithm \ref{algAPGDScreen}. For the SPGD algorithm in the stochastic setting, similarly, we repeat doing the screening test and updating $\A$ in the outer loop first. After that, we derive the active set. Then, the procedure is the same as the original SPGD algorithm with the new active set. The procedure of our algorithm in the stochastic setting is summarized in Algorithm \ref{algSPGDScreen}.

\begin{algorithm}[ht] 
\renewcommand{\algorithmicrequire}{\textbf{Input:}}
\renewcommand{\algorithmicensure}{\textbf{Output:}}
\caption{APGD with Our Safe Screening Rules}
\begin{algorithmic}[1]
\REQUIRE $B^{0},\hat{B}^{1}=B^{0},t_{1}=1$.
\FOR{$k=1,2,\ldots$}
\REPEAT 
\STATE Do the screening test based on (\ref{screening}).
\STATE Update $\A$.
\UNTIL $\A$ keeps unchanged.
\IF{$\A$ changes}
\STATE $t_{k} = t_{1}$.
\ENDIF
\STATE $B^{k} = prox_{t_{k},{\lambda}}(\hat{B}^{k}-t_{k}\nabla F(\hat{B}^k))$.
\STATE $t_{k+1}=\frac{1}{2}(1+\sqrt{1+4 t_{k}^{2}})$.
\STATE $\hat{B}^{k+1} = B^{k} + \frac{t_{k}-1}{t_{k+1}}(B^{k}-B^{k-1})$.
\ENDFOR
\ENSURE Coefficient $B$.
\end{algorithmic}
\label{algAPGDScreen}
\end{algorithm}

\begin{algorithm}[ht]
\renewcommand{\algorithmicrequire}{\textbf{Input:}}
\renewcommand{\algorithmicensure}{\textbf{Output:}}
\caption{SPGD with Our Safe Screening Rules}
\begin{algorithmic}[1]
\REQUIRE $B^{0},l$.
\FOR{$k=1,2,\ldots$}
\REPEAT 
\STATE Do the screening test based on (\ref{screening}).
\STATE Update $\A$.
\UNTIL $\A$ keeps unchanged.
\STATE $B = B^{k-1}$.
\STATE $\tilde{v} = \nabla F(B)$.
\STATE $\tilde{B}^{0} = B$.
\FOR{$t=1,2,\ldots,T$}
\STATE Pick mini-batch $I_{t} \subseteq X$ of size $l$.
\STATE $v_{t}=(\nabla F_{I_{t}}(\tilde{B}^{t-1}) - \nabla F_{I_{t}}(B))/l + \tilde{v}  $.
\STATE $\tilde{B}^{t} = prox_{\gamma,{\lambda}}(\tilde{B}^{t-1}-\gamma v_{t})$.
\ENDFOR
\STATE $B^{k} = \tilde{B}^{T}$.
\ENDFOR
\ENSURE Coefficient $B$.
\end{algorithmic}
\label{algSPGDScreen}
\end{algorithm}

Interestingly, the duality gap, which is the main time-consuming step of our screening rule, has been computed by the stopping criterion evaluation of the original APGD and SPGD algorithms.  Further, we analyze the overall complexity of our algorithms here, which consists of the per-iteration cost and the number of iterations. First, in terms of the per-iteration cost,  our Algorithm \ref{algAPGDScreen} only requires the computation over the active set $\A$ with $d_k$ variables rather than the full set of the original APGD algorithm. Second, in terms of the number of iterations, since the optimal solution of the inactive features screened at iteration $k$ must be zeros, discarding inactive features in advance can only make the objective function remain the same or decrease. Thus, our Algorithm \ref{algAPGDScreen} will take at most the same (usually fewer in practice) iterations to convergence to the same stopping criterion, compared to the original APGD algorithm. With fewer or the same iterations and lower per-iteration costs, our proposed algorithm is more efficient than the original one. Similarly, our Algorithm  \ref{algSPGDScreen} also takes fewer or the same iterations and lower per-iteration cost, the overall complexity is less than the original SPGD algorithm. 

Specifically, the computational gain of our methods relies on the sparsity of the final model. The training benefits from the screening rule more with sparser models. When $n < d$, the final model would be very sparse and we have $d_{k} \ll d  $ during the training. Thus, our method is well suitable for high-dimensional settings.  It clearly shows the proposed algorithms are always much faster than the original ones. Meanwhile, note that our method also works for datasets with $n>d$. Provided that the sparsity exists, our screening rule will identify the inactive features and find the final active set in a finite number of iterations (Property 4). Thus, we still have $d_{k} < d $, and the per-iteration cost of our algorithm will be less than the original one. Thus, with fewer or the same iterations and lower per-iteration costs, the proposed algorithms are faster than the original ones for $n>d$.

Hence, by exploiting the sparsity in high-dimensional sparse learning, the computational and memory costs of both APGD and SPGD algorithms for Group OWL models are effectively reduced by our screening rule to a large extent.

\begin{figure*}[t]
\centering
\subfigure[PG]{\includegraphics[width=0.23\textwidth]{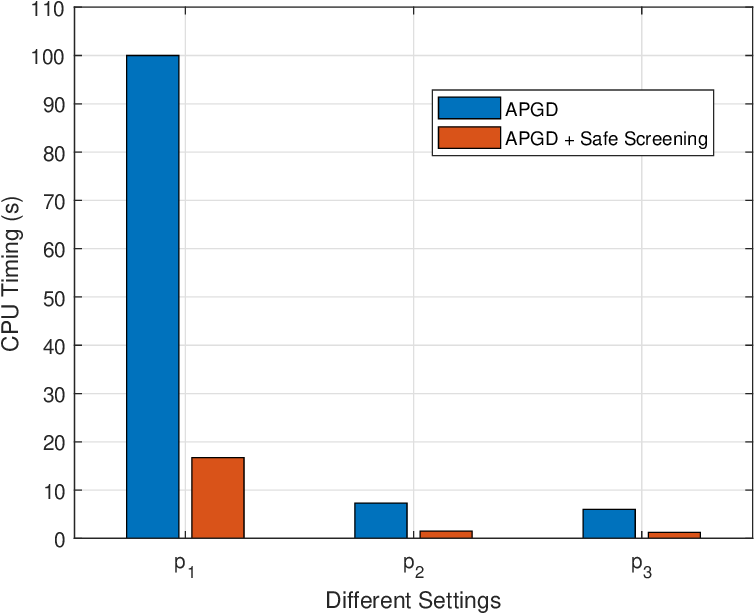}}
\hspace{2mm}
\subfigure[EG]{\includegraphics[width=0.23\textwidth]{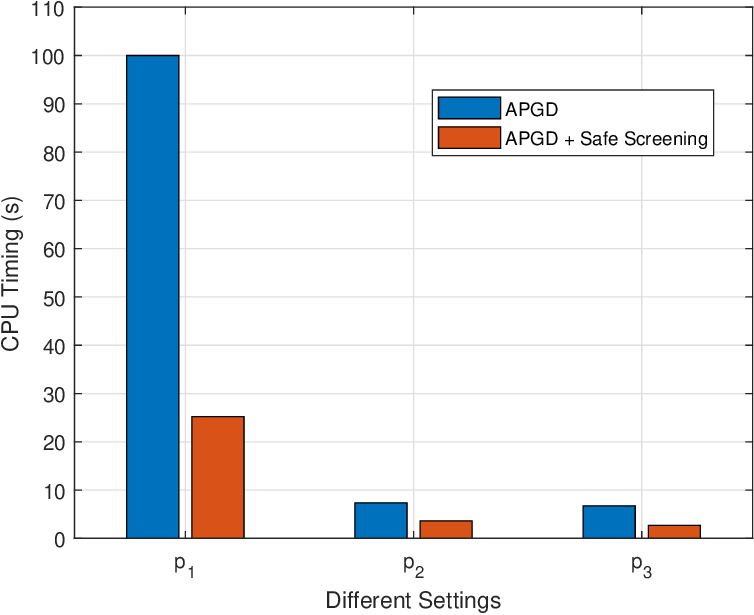}}
\hspace{2mm}
\subfigure[IL]{\includegraphics[width=0.23\textwidth]{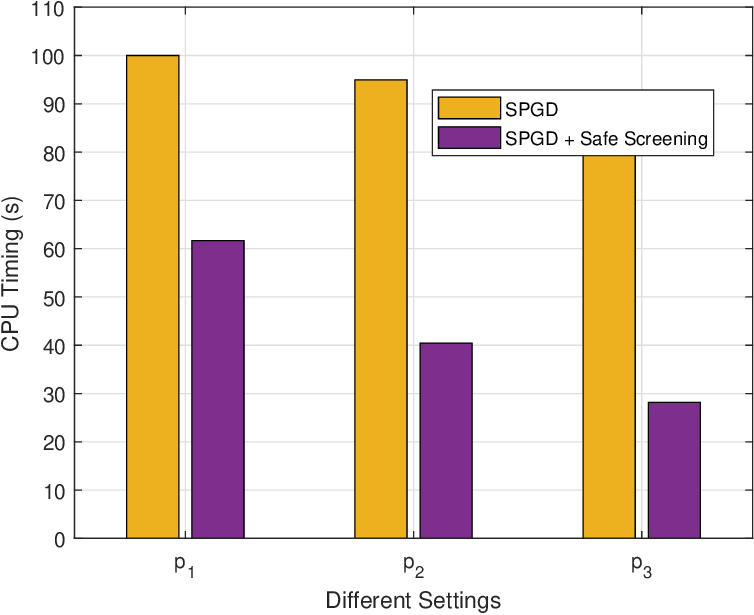}}
\hspace{2mm}
\subfigure[BM]{\includegraphics[width=0.23\textwidth]{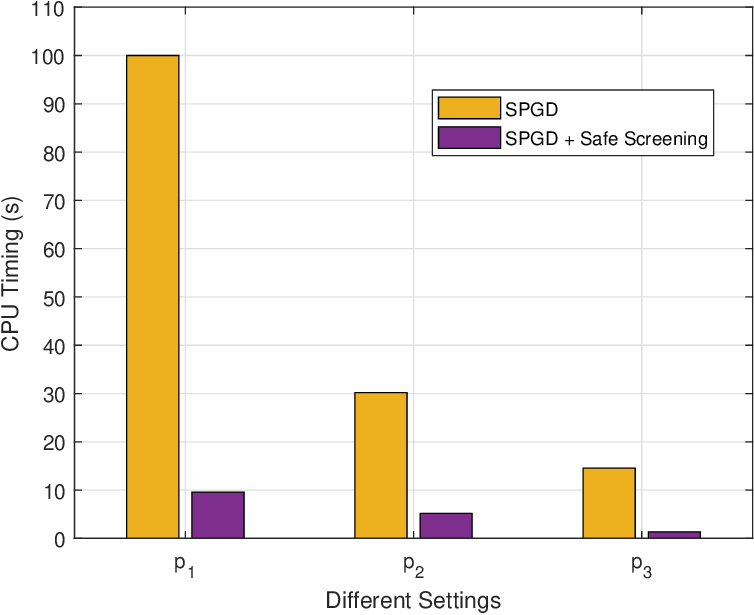}}
\caption{Running time of the algorithms without and with safe screening for Group OWL regression.}
\label{fig1}
\end{figure*}

\begin{figure*}[t]
\centering
\subfigure[PG]{\includegraphics[width=0.23\textwidth]{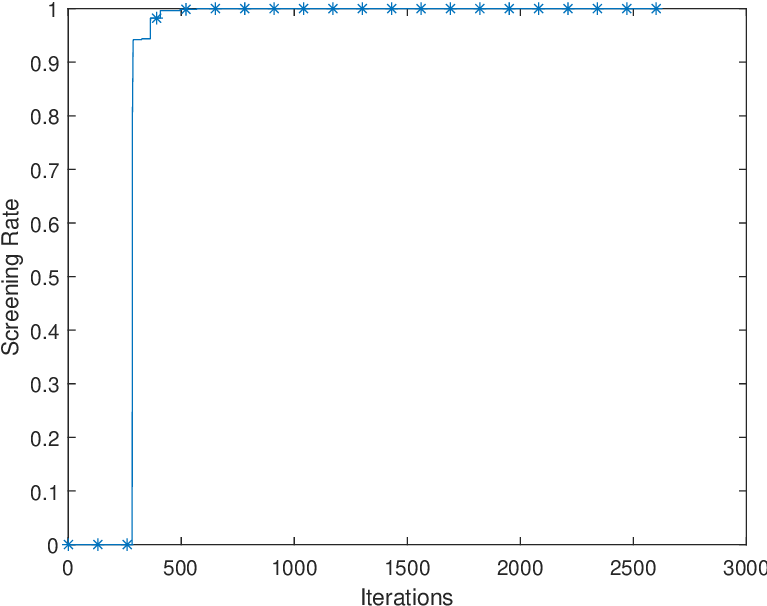}}
\hspace{2mm}
\subfigure[EG]{\includegraphics[width=0.23\textwidth]{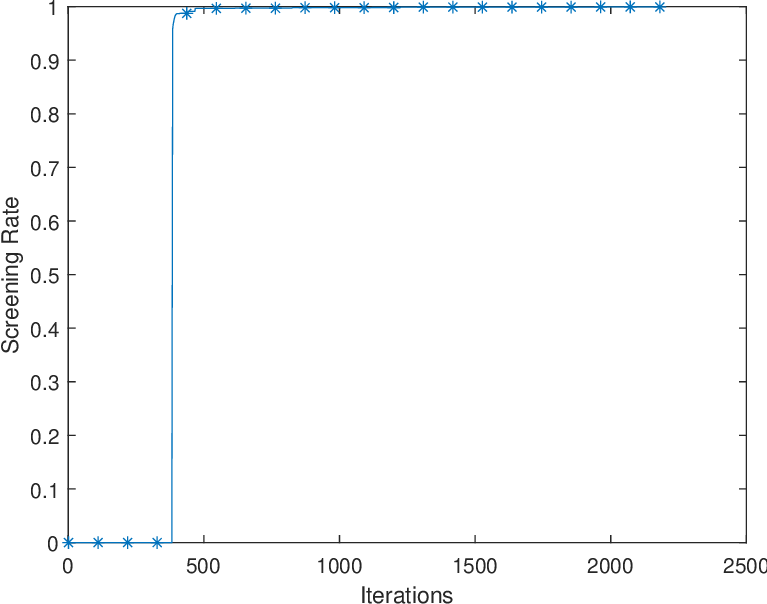}}
\hspace{2mm}
\subfigure[IL]{\includegraphics[width=0.23\textwidth]{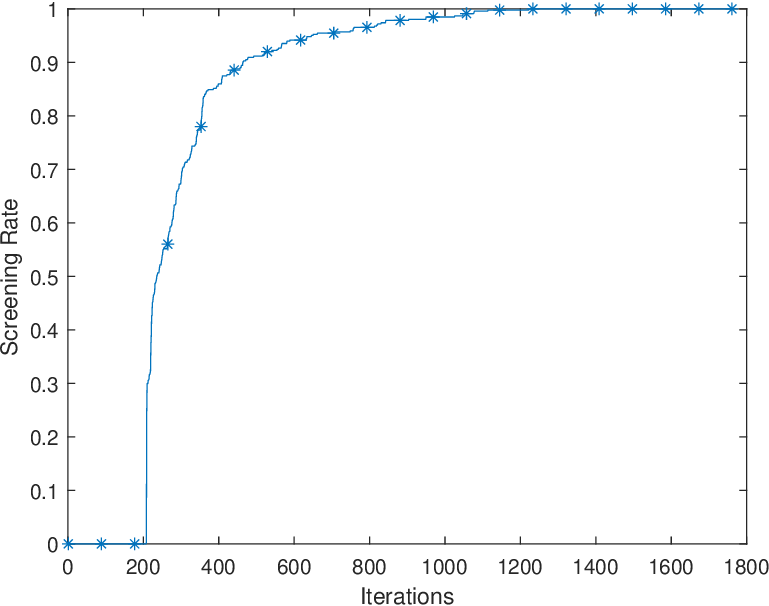}}
\hspace{2mm}
\subfigure[BM]{\includegraphics[width=0.23\textwidth]{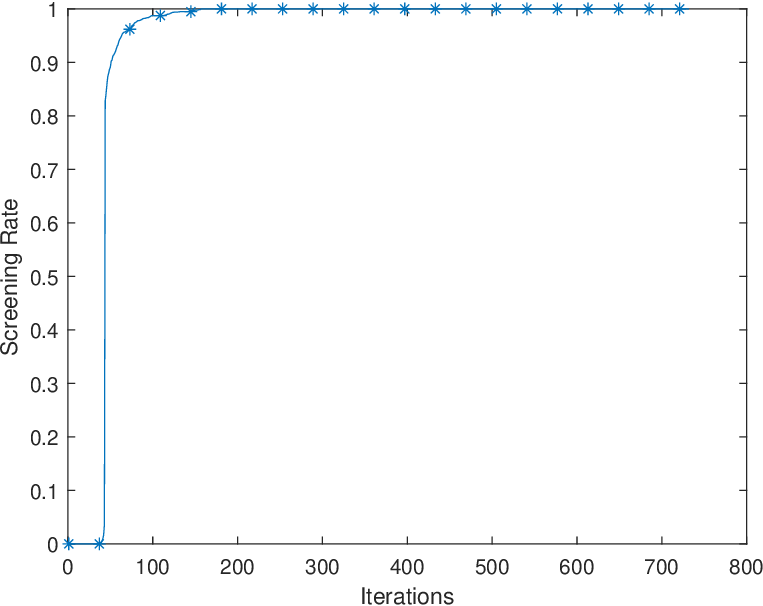}}
\caption{The screening rate of our screening rule in  the batch and stochastic settings for Group OWL regression.}
\label{fig2}
\end{figure*}

\subsection{Theoretical Analysis}

\begin{property}
The screening rule we proposed is guaranteed to be safe for the whole training process of Group OWL  models with any iterative optimization algorithm. \label{propertysafe}
\end{property}

Property \ref{propertysafe} shows our screening rule is safe to screen the features whose coefficients should be zero with the unknown dual optimum and the unknown order structure of the primal optimum for Group OWL models during the whole training process with any iterative optimization algorithm.

\begin{property} \label{propertyconverge}
Suppose iterative algorithm $\Psi$ to solve Group OWL models converges to the optimum, algorithm $\Psi$ with our screening rule to solve Group OWL models also converges to the optimum. 
\end{property}

Property \ref{propertyconverge} shows the convergence of standard iterative optimization algorithms with our screening rules can be guaranteed by the original algorithms. Thus, our screening rule can be directly applied to existing iterative optimization algorithms, \emph{e.g.}, APGD, SPGD, and their variants, with guaranteed convergence.

\begin{theorem} \label{propertyscreen}
Based on conditions (\ref{optimal}),  for any feature $ i $ belongs to the final active set $ \A^{*} $, we have that  $  \|x_i^\top \Theta^* \|_{2} \in [\min_{j\in  \A^*}\lambda_j, \max_{j\in \A^*}\lambda_j] $. Then, as algorithm $\Psi$ converges, there exists an iteration number $K_{0} \in \mathbb{N}$ s.t. $\forall k \geq K_{0}$, any feature $i \notin \A^{*}$ is screened by our screening rule. 
\end{theorem}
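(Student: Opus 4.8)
The statement's first sentence merely recalls (\ref{optimal}); the substance is the finite-time identification claim, so the plan is to show that for every inactive feature $i \notin \A^*$ the dynamic test (\ref{screening}) eventually fires. I would first reduce the claim to a quantitative separation at the optimum. Since $\Psi$ converges, its primal--dual iterates satisfy $\Theta^{(k)} \to \Theta^*$ and, by Corollary \ref{corollary1} together with $P(B^{(k)}) \to D(\Theta^*)$, the gap $G(B^{(k)},\Theta^{(k)}) \to 0$. Consequently the left-hand side of (\ref{screening}), namely $\|x_i^\top \Theta^{(k)}\|_2 + \|x_i\|\sqrt{2G^{(k)}/L}$, converges to $\|x_i^\top \Theta^*\|_2$ for each fixed $i$. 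The whole argument therefore hinges on comparing this limit with the right-hand side $\lambda_{|\A^{(k)}|}$.

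Second, I would establish a uniform margin for the inactive features. From the Fermat rule (\ref{subdiff}) applied to the zero rows and the optimality characterization behind (\ref{condition00}), every $i \notin \A^*$ obeys $\|x_i^\top \Theta^*\|_2 \le \lambda_{|\A^*|}$; under a non-degeneracy hypothesis (strict complementarity, or a strict weight gap at rank $|\A^*|$) this holds strictly, so that $\delta := \lambda_{|\A^*|} - \max_{i \notin \A^*}\|x_i^\top \Theta^*\|_2 > 0$ since the inactive set is finite. Combining with the first step, there is $K_1$ such that for all $k \ge K_1$ and all $i \notin \A^*$ the left-hand side of (\ref{screening}) lies below $\lambda_{|\A^*|} - \delta/2$.

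Third --- and this is the delicate point --- I must reconcile the threshold actually used, $\lambda_{|\A^{(k)}|}$, with $\lambda_{|\A^*|}$. Property \ref{propertysafe} guarantees $\A^{(k)} \supseteq \A^*$, hence $|\A^{(k)}| \ge |\A^*|$ and $\lambda_{|\A^{(k)}|} \le \lambda_{|\A^*|}$, which is the wrong direction: a conservative threshold can prevent screening. The plan is to run the inner repeat-loop as a peeling argument on the sorted optimal correlations $c^*_{[1]} \ge \cdots \ge c^*_{[d]}$, where the bottom feature is removed at threshold $\lambda_d$, raising the threshold to $\lambda_{d-1}$, then the next is removed, and so on until $|\A|$ reaches $|\A^*|$ and the test stabilizes (no active feature is ever removed, by safety). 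For this peeling to proceed one rank at a time I would need the termwise strict domination $c^*_{[|\A^*|+j]} < \lambda_{|\A^*|+j}$ for every $j \ge 1$; granting this and choosing $K_0 \ge K_1$ large enough that the gap perturbation is smaller than $\min_j(\lambda_{|\A^*|+j} - c^*_{[|\A^*|+j]})$, each inner round strictly shrinks $\A$ until $\A = \A^*$, which proves the claim for all $k \ge K_0$.

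The main obstacle is exactly this last step. OWL optimality is governed by cumulative, majorization-type constraints $\sum_{j \le t} c^*_{[j]} \le \sum_{j \le t}\lambda_j$, not by the termwise inequalities that the single-threshold test compares against: two inactive features can share a correlation that exceeds the small weight $\lambda_{|\A^{(k)}|}$ while still respecting every partial-sum bound, and then the repeat-loop stalls at a strict superset of $\A^*$. Closing this gap cleanly therefore requires either upgrading the feasibility conditions to the termwise strict domination above (an explicit non-degeneracy assumption) or strengthening the inner rule to exploit the full sorted structure rather than a single scalar threshold. I expect essentially all the difficulty of the proof to concentrate here, whereas the convergence and separation steps are routine.
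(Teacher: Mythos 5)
Your first two steps reproduce the paper's own proof almost verbatim: the paper bounds $\|x_i^\top \Theta^k \|_{2} + \|x_i\|\sqrt{2G(B^k,\Theta^k)/L}$ by $2\|x_i\|\epsilon + \|x_i^\top \Theta^* \|_{2}$ via the triangle inequality and Corollary \ref{corollary1}, asserts $\lambda_{|\A^{*}|}-\|x_i^\top \Theta^* \|_{2} > 0$ for every $i \notin \A^{*}$, and then chooses $\epsilon$ below the resulting margin (per feature rather than uniformly, which is equivalent since the inactive set is finite). Where you go beyond the paper is precisely your second and third steps. The strict inequality $\|x_i^\top \Theta^* \|_{2} < \lambda_{|\A^{*}|}$ for inactive $i$ is stated in the paper with no justification; as you observe, the optimality analysis only establishes the implication in (\ref{condition00}) in the other direction, so some non-degeneracy or strict-complementarity hypothesis is genuinely required, and the paper uses it silently.

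The threshold mismatch you isolate in your third step is also a real gap in the paper's argument: the proof concludes that the left-hand side falls below $\lambda_{|\A^{*}|}$ and declares this to be ``the screening rule we proposed,'' but rule (\ref{screening}) tests against $\lambda_{|\A|}$ for the current surviving set $\A \supseteq \A^{*}$, and since the weights are non-increasing this gives $\lambda_{|\A|} \leq \lambda_{|\A^{*}|}$ --- the conservative direction, exactly as you say. The paper relies only on the informal description of the repeat-until-stable peeling loop to bridge this, and never verifies that the peeling actually descends to $|\A^{*}|$; your observation that the OWL optimality conditions are cumulative majorization constraints, so that termwise domination $c^*_{[|\A^*|+j]} < \lambda_{|\A^*|+j}$ does not follow from them and the loop can stall at a strict superset of $\A^{*}$, is a correct diagnosis of where the argument is incomplete. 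In short, your proposal matches the paper's route where the paper is rigorous and gives an honest accounting of the two assumptions the paper leaves implicit; the obstacle you flag in your final paragraph is not resolved in the paper either.
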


\begin{proof}
First, based on the strong concavity of the dual in Property 1, we know $\Theta^*$ is unique and $\Theta$ converges to $\Theta^*$ as $B$ converges to $B^*$, which means, as $\Psi$ converges, the intermediate duality gap converges towards zero during the training process. 

Thus, for any given tolerance $\epsilon$, there exists $K_{0}$ such that $\forall k \geq K_{0}$, we have 
\begin{eqnarray} \label{converge1}
    \| \Theta^{k} - \Theta^* \|_{F} \leq  \epsilon, \quad
 \sqrt{\frac{2G(B^k, \Theta^k)}{L}}\leq  \epsilon. 
\end{eqnarray}
For any $i \notin \A^{*}$, we have 
\begin{eqnarray} 
&&  \|x_i^\top \Theta^k \|_{2} +  \|x_i\|\sqrt{\frac{2G(B^k, \Theta^k)}{L}} \nonumber\\
& \leq & \|x_i\|\|\Theta^k-\Theta^*\|_{F} + \|x_i^\top \Theta^* \|_{2}   +   \|x_i\|\sqrt{\frac{2G(B^k, \Theta^k)}{L}} \nonumber\\
& \leq &  2 \|x_i \| \epsilon + \|x_i^\top \Theta^* \|_{2}.
\end{eqnarray}
The first inequality is obtained by the triangle inequality and the second inequality is obtained by (\ref{converge1}). 

Because  $i \notin \A^{*}$, we have $\lambda_{|\A^{*}|}-\|x_i^\top \Theta^* \|_{2} > 0 $. Thus, if we choose  
$
\epsilon < \frac{\lambda_{|\A^{*}|}-\|x_i^\top \Theta^* \|_{2} }{2\|x_i \|},
$
we have 
$
\|x_i^\top \Theta^k \|_{2} +  \|x_i\|\sqrt{{2G(B^k, \Theta^k)}/{L}}  < \lambda_{|\A^{*}|}, 
$
which is the screening rule we proposed. That is to say, feature $i$ is screened out by (\ref{screening}) at most at this iteration, which completes the proof.
\end{proof}

Theorem \ref{propertyscreen} shows the superb screening ability of our screening rules. Specifically, as the iterative algorithm converges, the intermediate duality gap becomes smaller. Correspondingly, the upper bound of the left term of our screening rule becomes tighter and the lower bound of the right term becomes larger. Hence, our screening rule is promising to screen more inactive variables. Finally, any inactive feature $i \notin \A^{*}$ is correctly detected and effectively screened by our screening rule in a finite number of iterations.

\section{Experiments}

\subsection{Experimental Setup}

\paragraph{Design of Experiments}
We conducted extensive experiments on real-world benchmark datasets for different Group OWL models not only to verify the effectiveness of our algorithm in reducing running time but also to show the effectiveness of the algorithms on screening inactive variables.

To validate the effectiveness of our algorithms in reducing running time, we evaluate the running time of our algorithms and other competitive algorithms to solve different Group OWL models under different setups. Group OWL models solved in our experiments include Group OWL regression and multinomial OWL regression.  Since the APGD algorithm performs well when $n \ll p $ and the SPGD  algorithm is proposed for large-scale learning where $n$ is large, for each model, we compare the running time in both batch and stochastic settings with different datasets respectively. Specifically, in the batch setting, we compare: 1) APGD; 2) APGD with our screening rules. In the stochastic setting, we compare: 1) SPGD, 2) SPGD with our screening rules.

To confirm the effectiveness of our algorithms on screening inactive variables, we evaluate the screening rate at each iteration of the algorithms with our screening rules for different Group OWL models in both batch and stochastic settings on the different datasets during the training process.

\begin{figure*}[t]
\centering
\subfigure[DF]{\includegraphics[width=0.23\textwidth]{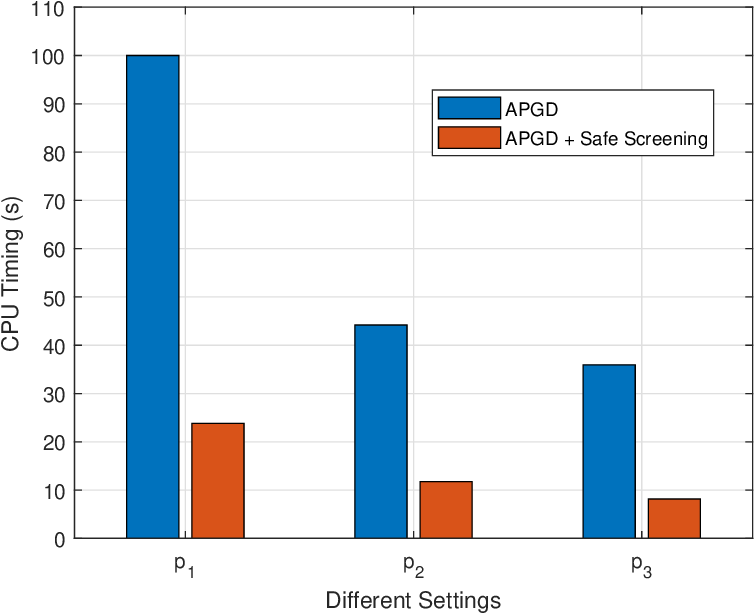}}
\hspace{2mm}
\subfigure[RE]{\includegraphics[width=0.23\textwidth]{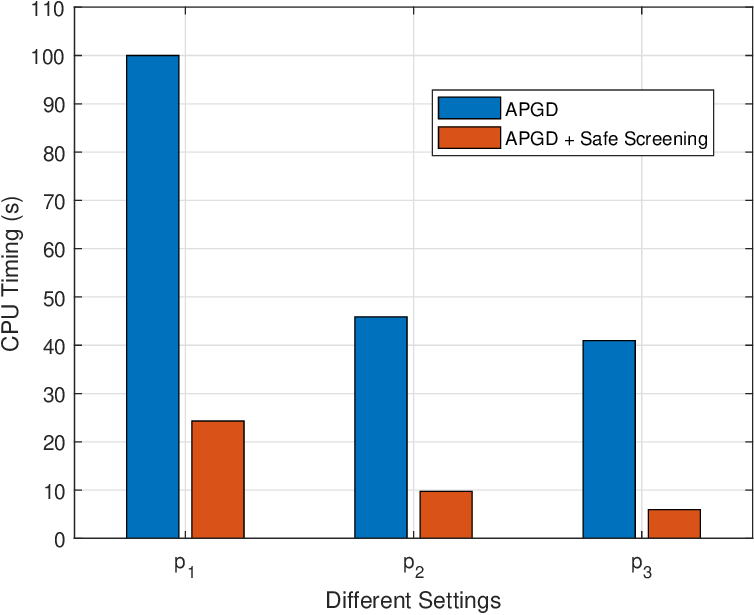}}
\hspace{2mm}
\subfigure[PR]{\includegraphics[width=0.23\textwidth]{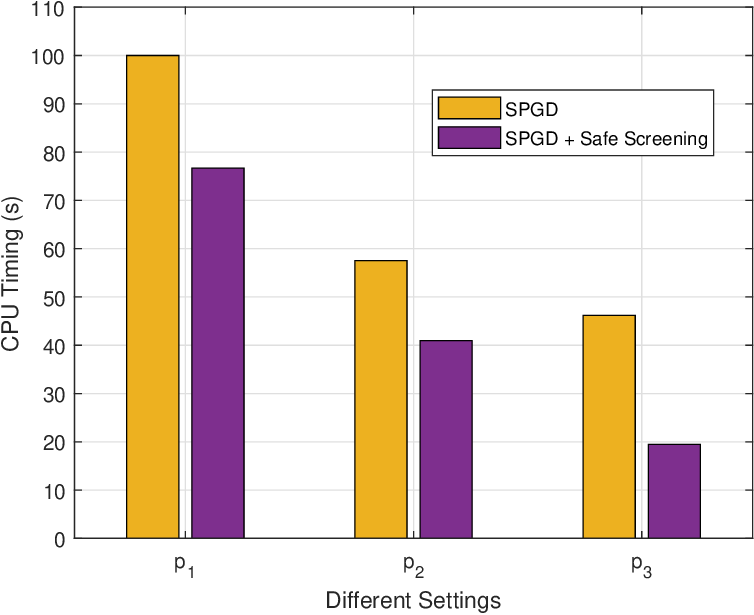}}
\hspace{2mm}
\subfigure[MN]{\includegraphics[width=0.23\textwidth]{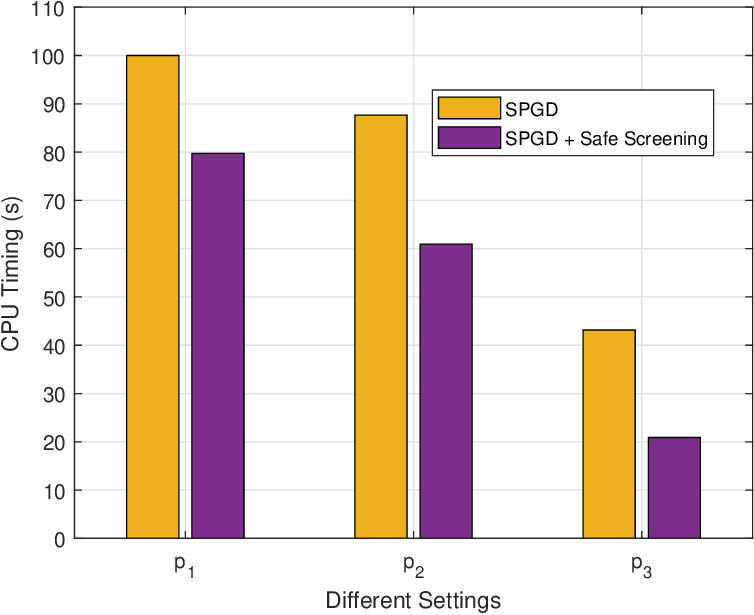}}
\caption{Running time of the algorithms without and with safe screening for multinomial OWL regression.}
\label{fig3}
\end{figure*}

\begin{figure*}[t]
\centering
\subfigure[DF]{\includegraphics[width=0.23\textwidth]{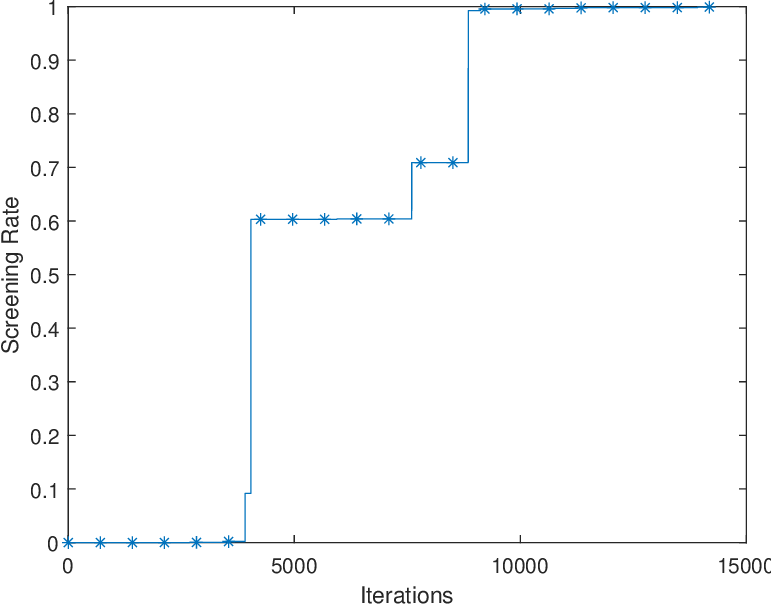}}
\hspace{2mm}
\subfigure[RE]{\includegraphics[width=0.23\textwidth]{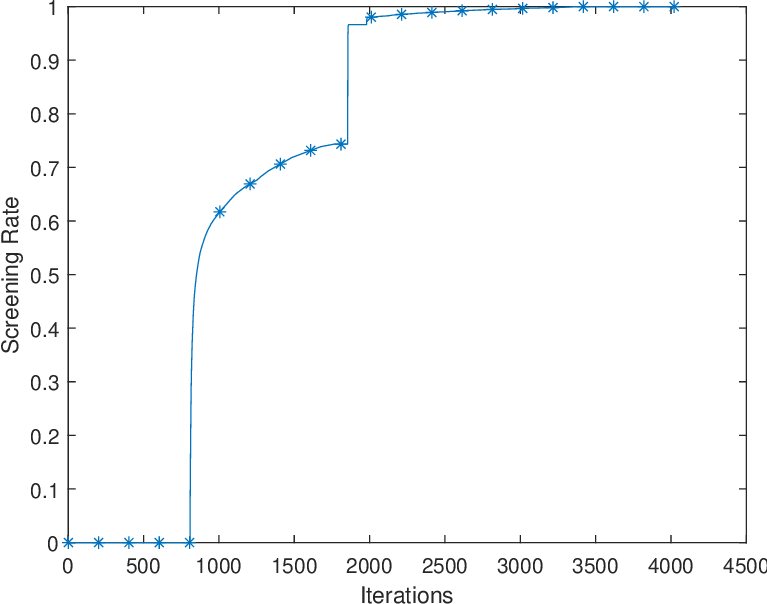}}
\hspace{2mm}
\subfigure[PR]{\includegraphics[width=0.23\textwidth]{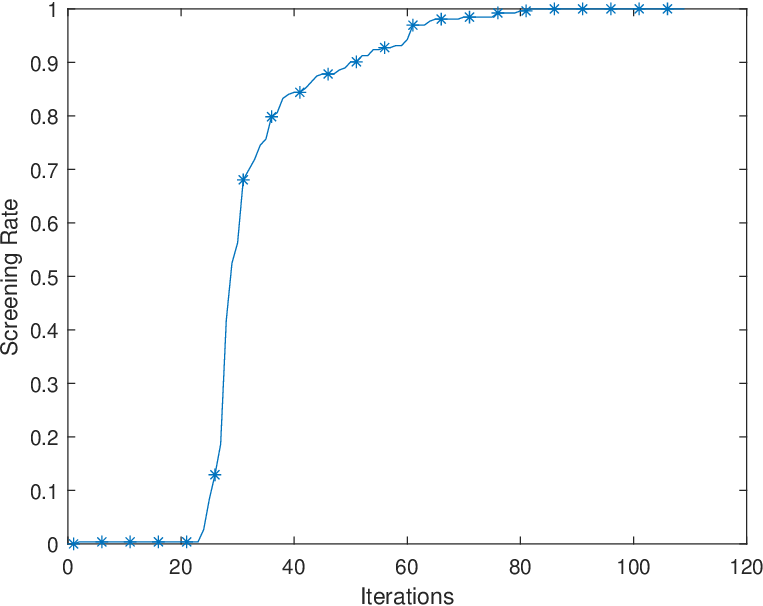}}
\hspace{2mm}
\subfigure[MN]{\includegraphics[width=0.23\textwidth]{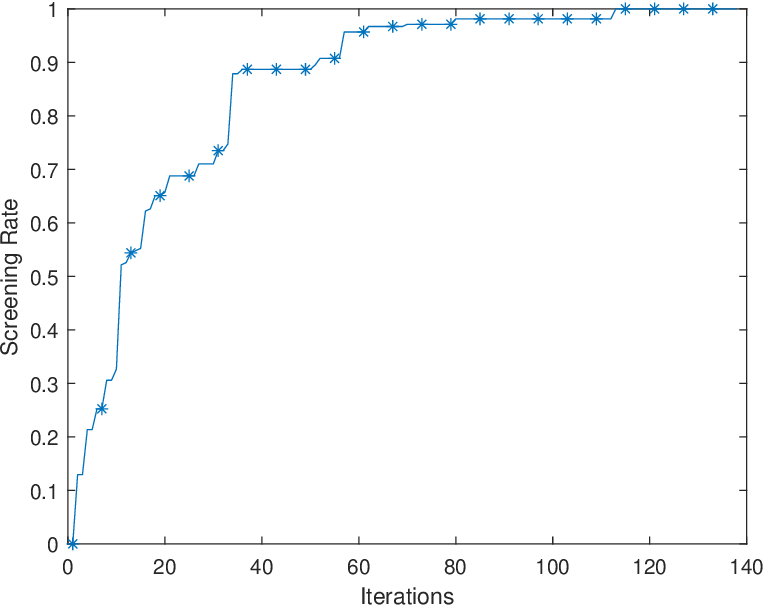}}
\caption{The screening rate of our screening rule in both batch and stochastic settings for multinomial OWL regression.}
\label{fig4}
\end{figure*}

\paragraph{Datasets}
Table \ref{table:datasets} summarizes the statistics of the benchmark datasets used in our experiments.  Protein and Mnist are from the LIBSVM repository \cite{chang2011libsvm}. DrivFace and IndoorLoc are from the UCI benchmark repository \cite{Dua:2019}. IndoorLoc has $2$ tasks, namely IndoorLoc Latitude and IndoorLoc Longitude. Reuters21578  is from \cite{cai2005document}. PlantGO, EukaryoteGO and Bookmarks are from \cite{xu2016multi, katakis2008multilabel} respectively.

\begin{table}[t]
\renewcommand{\arraystretch}{1.0}
\caption{The descriptions of benchmark datasets used.}
\begin{center}
\begin{tabular}{lcc}
\toprule
\textbf{Dataset} & \textbf{Sample Size} & \textbf{Attribute Size} \\
\midrule
    
     DrivFace (DF) & 606  & 6400   \\
    
     PlantGO (PG)    & 978 & 3091  \\
      
     EukaryoteGO (EG) & 4658  & 12689  \\
      
     Reuters21578 (RE)    & 5801 & 18933 \\

     Protein (PR)      & 17766 & 357 \\
    
     IndoorLoc (IL) & 21048 & 520  \\

     Mnist (MN) & 60000 & 780 \\

     Bookmarks (BM) & 87860 & 2150  \\

\bottomrule
\end{tabular}

\end{center}
	\label{table:datasets}
\end{table}

\paragraph{Implementation Details}
We implement all the algorithms in MATLAB. We compare the average running CPU time on a 2.70 GHz machine for 5 trials. For the implementation, we follow the basic setting in \cite{bogdan2015slope,brzyski2019group} and set the tolerance error $\epsilon$ of the duality gap and the dual infeasibility as $10^{-6}$. For a fair comparison, all the experimental setups in Algorithm \ref{algAPGDScreen} and \ref{algSPGDScreen} follow the original APGD and SPGD algorithms with the same hyperparameters.  In the stochastic setting, for different datasets, the size of the mini-batch and the number of the inner loop are set as 30 or 40. Step size $\gamma$ is selected from $10^{-8}$ to $10^{-5}$. In the beginning, the duality gap of Group OWL models in APGD algorithms is large and thus the algorithm can barely benefit from our screening rule. We run our algorithms without screening at first and then apply our screening rule with a warm start. For comparison convenience, the CPU time of each algorithm is shown as the percentage of running time of the first setting for each dataset. 
OSCAR is a popular hyperparameter setting for the family of OWL norms \cite{oswal2016representational, zhong2012efficient, zhang2018learning}). We use the OSCAR setting for all our experiments:
    $\lambda_{i} = \alpha_{1} + \alpha_{2}(d-i),$
where $\alpha_{1} = p_{i} \|X^\top Y\|_{2,\infty}$ for Group OWL regression and multinomial OWL regression  and $\alpha_{2} = \alpha_{1}/d$. For a fair comparison, the factor $p_{i}$ is used to control the sparsity. In our experiments, we set $p_{i} = i * e^{-\tau}$, $i = 1,2, 3$. 

In Group OWL regression, we perform the batch algorithms on EukaryoteGO with $22$ tasks and PlantGo with $12$ tasks and the stochastic algorithm on IndoorLoc with $2$ tasks and Bookmarks with the first $20$ tasks. We set $\tau =2$ for Bookmarks, $\tau =4$ for EukaryoteGO  and $\tau =3$ for others. In multinomial OWL regression, we perform the batch algorithms on DrivFace with $3$ categories and Reuters21578  with $30$ categories and the stochastic algorithms on Protein with $3$ categories and Mnist with $10$ categories and set $\tau =2$ for Protein and Mnist and $\tau =3$ for DrivFace and Reuters21578. 

To evaluate the screening rate of our algorithms, the screening rate is computed as the proportion of the inactive variables screened by our screening test to the total inactive ones at the optimum. We choose setting $p_{1}$ for the evaluation of the screening rate.

\subsection{Experimental Results and Discussions}
\paragraph{Group OWL Regression}
Figures \ref{fig1}(a)-(d) plot the results of the average running time of the algorithms with and without our safe screening rule for Group OWL regression in the batch and stochastic settings in different situations respectively. When $n \ll d$, the results show, with our safe screening rule, the APGD algorithm achieves the computational gain to the original algorithm by a factor of 2x to 6x. For large-scale learning, the results show SPGD algorithm with our safe screening rule achieves the gain over the original one by a factor of 1.5x to 10x. The results confirm that the algorithms with our screening rule are always much faster than the original ones in both batch and stochastic settings for Group OWL regression. 

Figures \ref{fig2}(a)-(d) present the results of the screening rate of our algorithms in both batch and stochastic settings for Group OWL regression. The results show that our algorithm can successfully screen most of the inactive variables at the early stage, reach the final active set, and screen almost all the inactive variables in a finite number of iterations and thus is an effective method to screen inactive features for all the tasks of Group OWL regression.

\paragraph{Multinomial OWL Regression}
Figures \ref{fig3}(a)-(d) provide the results of the average running time of the algorithms with and without our safe screening rule for multinomial OWL regression in the batch and stochastic settings in different situations respectively. When $n \ll d$, the results show, with our safe screening rule, the APGD algorithm achieves the computational gain by a factor of 4x to 7x. For large-scale learning, the results show SPGD algorithm with our safe screening rule achieves the gain by a factor of 1.2x to 2.4x. The results confirm that the algorithms with our screening rule are always much faster than the original ones in both batch and stochastic settings for multinomial OWL regression. 

Figures \ref{fig4}(a)-(d) present the results of the screening rate of our algorithms in batch and stochastic settings for multinomial OWL regression. The results confirm the screening ability of our algorithm for multinomial OWL regression.

\section{Conclusion}
In this paper, we proposed safe screening rules for Group OWL models by overcoming the difficulties caused by the non-separable penalty, which can significantly accelerate the training process by avoiding the useless computations of the inactive variables. Specifically, our screening rule is doubly dynamic with a decreasing left term by tracking the intermediate duality gap and an increasing right term by exploring the order of the primal solution via an iterative strategy. More importantly, the proposed screening rules can be easily incorporated into existing iterative optimization algorithms in both batch and stochastic settings. Theoretically, we rigorously proved that our screening rule is not only safe for itself but also safe when applied to existing iterative optimization algorithms. Empirical results show our algorithms can lead to significant computational gains without any loss of accuracy by screening inactive variables.

\bibliographystyle{IEEEtran}
\bibliography{bibfile}

\begin{thebibliography}{10}
\providecommand{\url}[1]{#1}
\csname url@samestyle\endcsname
\providecommand{\newblock}{\relax}
\providecommand{\bibinfo}[2]{#2}
\providecommand{\BIBentrySTDinterwordspacing}{\spaceskip=0pt\relax}
\providecommand{\BIBentryALTinterwordstretchfactor}{4}
\providecommand{\BIBentryALTinterwordspacing}{\spaceskip=\fontdimen2\font plus
\BIBentryALTinterwordstretchfactor\fontdimen3\font minus \fontdimen4\font\relax}
\providecommand{\BIBforeignlanguage}[2]{{%
\expandafter\ifx\csname l@#1\endcsname\relax
\typeout{** WARNING: IEEEtran.bst: No hyphenation pattern has been}%
\typeout{** loaded for the language `#1'. Using the pattern for}%
\typeout{** the default language instead.}%
\else
\language=\csname l@#1\endcsname
\fi
#2}}
\providecommand{\BIBdecl}{\relax}
\BIBdecl

\bibitem{ben2003exploiting}
S.~Ben-David and R.~Schuller, ``Exploiting task relatedness for multiple task learning,'' in \emph{Learning Theory and Kernel Machines}.\hskip 1em plus 0.5em minus 0.4em\relax Springer, 2003, pp. 567--580.

\bibitem{micchelli2005kernels}
C.~A. Micchelli and M.~Pontil, ``Kernels for multi--task learning,'' in \emph{Advances in neural information processing systems}, 2005, pp. 921--928.

\bibitem{maurer2016benefit}
A.~Maurer, M.~Pontil, and B.~Romera-Paredes, ``The benefit of multitask representation learning,'' \emph{The Journal of Machine Learning Research}, vol.~17, no.~1, pp. 2853--2884, 2016.

\bibitem{lin2019pareto}
X.~Lin, H.-L. Zhen, Z.~Li, Q.-F. Zhang, and S.~Kwong, ``Pareto multi-task learning,'' in \emph{Advances in Neural Information Processing Systems}, 2019, pp. 12\,037--12\,047.

\bibitem{evgeniou2004regularized}
T.~Evgeniou and M.~Pontil, ``Regularized multi--task learning,'' in \emph{Proceedings of the tenth ACM SIGKDD international conference on Knowledge discovery and data mining}, 2004, pp. 109--117.

\bibitem{zhou2011malsar}
J.~Zhou, J.~Chen, and J.~Ye, ``Malsar: Multi-task learning via structural regularization,'' \emph{Arizona State University}, vol.~21, 2011.

\bibitem{misra2016cross}
I.~Misra, A.~Shrivastava, A.~Gupta, and M.~Hebert, ``Cross-stitch networks for multi-task learning,'' in \emph{Proceedings of the IEEE Conference on Computer Vision and Pattern Recognition}, 2016, pp. 3994--4003.

\bibitem{sener2018multi}
O.~Sener and V.~Koltun, ``Multi-task learning as multi-objective optimization,'' in \emph{Advances in Neural Information Processing Systems}, 2018, pp. 527--538.

\bibitem{argyriou2007multi}
A.~Argyriou, T.~Evgeniou, and M.~Pontil, ``Multi-task feature learning,'' in \emph{Advances in neural information processing systems}, 2007, pp. 41--48.

\bibitem{lounici2009taking}
K.~Lounici, M.~Pontil, A.~B. Tsybakov, and S.~Van De~Geer, ``Taking advantage of sparsity in multi-task learning,'' \emph{arXiv preprint arXiv:0903.1468}, 2009.

\bibitem{obozinski2011support}
G.~Obozinski, M.~J. Wainwright, M.~I. Jordan \emph{et~al.}, ``Support union recovery in high-dimensional multivariate regression,'' \emph{The Annals of Statistics}, vol.~39, no.~1, pp. 1--47, 2011.

\bibitem{lounici2011oracle}
K.~Lounici, M.~Pontil, S.~Van De~Geer, A.~B. Tsybakov \emph{et~al.}, ``Oracle inequalities and optimal inference under group sparsity,'' \emph{The annals of statistics}, vol.~39, no.~4, pp. 2164--2204, 2011.

\bibitem{abernethy2009new}
J.~Abernethy, F.~Bach, T.~Evgeniou, and J.-P. Vert, ``A new approach to collaborative filtering: Operator estimation with spectral regularization,'' \emph{Journal of Machine Learning Research}, vol.~10, no. Mar, pp. 803--826, 2009.

\bibitem{oswal2016representational}
U.~Oswal, C.~Cox, M.~Lambon-Ralph, T.~Rogers, and R.~Nowak, ``Representational similarity learning with application to brain networks,'' in \emph{International Conference on Machine Learning}, 2016, pp. 1041--1049.

\bibitem{zeng2014group}
X.~Zeng and M.~A. Figueiredo, ``Group-sparse matrix recovery,'' in \emph{2014 IEEE International Conference on Acoustics, Speech and Signal Processing (ICASSP)}.\hskip 1em plus 0.5em minus 0.4em\relax IEEE, 2014, pp. 7153--7157.

\bibitem{zhang2018learning}
D.~Zhang, H.~Wang, M.~Figueiredo, and L.~Balzano, ``Learning to share: Simultaneous parameter tying and sparsification in deep learning,'' 2018.

\bibitem{Laurent2012safe}
T.~R. Laurent El~Ghaoui, Vivian~Viallon, ``Safe feature elimination in sparse supervised learning,'' \emph{Pacific Journal of Optimization}, vol.~8, p. 667–698, 2012.

\bibitem{tibshirani2012strong}
R.~Tibshirani, J.~Bien, J.~Friedman, T.~Hastie, N.~Simon, J.~Taylor, and R.~J. Tibshirani, ``Strong rules for discarding predictors in lasso-type problems,'' \emph{Journal of the Royal Statistical Society: Series B (Statistical Methodology)}, vol.~74, no.~2, pp. 245--266, 2012.

\bibitem{wang2013lasso}
J.~Wang, J.~Zhou, P.~Wonka, and J.~Ye, ``Lasso screening rules via dual polytope projection,'' in \emph{Advances in neural information processing systems}, 2013, pp. 1070--1078.

\bibitem{xiang2016screening}
Z.~J. Xiang, Y.~Wang, and P.~J. Ramadge, ``Screening tests for lasso problems,'' \emph{IEEE transactions on pattern analysis and machine intelligence}, vol.~39, no.~5, pp. 1008--1027, 2016.

\bibitem{fercoq2015mind}
O.~Fercoq, A.~Gramfort, and J.~Salmon, ``Mind the duality gap: safer rules for the lasso,'' in \emph{International Conference on Machine Learning}, 2015, pp. 333--342.

\bibitem{shibagaki2016simultaneous}
A.~Shibagaki, M.~Karasuyama, K.~Hatano, and I.~Takeuchi, ``Simultaneous safe screening of features and samples in doubly sparse modeling,'' in \emph{International Conference on Machine Learning}, 2016, pp. 1577--1586.

\bibitem{ndiaye2016gap}
E.~Ndiaye, O.~Fercoq, A.~Gramfort, and J.~Salmon, ``Gap safe screening rules for sparse-group lasso,'' in \emph{Advances in Neural Information Processing Systems}, 2016, pp. 388--396.

\bibitem{rakotomamonjy2019screening}
A.~Rakotomamonjy, G.~Gasso, and J.~Salmon, ``Screening rules for lasso with non-convex sparse regularizers,'' in \emph{International Conference on Machine Learning}, 2019, pp. 5341--5350.

\bibitem{bao2020fast}
R.~Bao, B.~Gu, and H.~Huang, ``Fast oscar and owl regression via safe screening rules,'' in \emph{International conference on machine learning}.\hskip 1em plus 0.5em minus 0.4em\relax PMLR, 2020, pp. 653--663.

\bibitem{bao2022accelerated}
------, ``An accelerated doubly stochastic gradient method with faster explicit model identification,'' in \emph{Proceedings of the 31st ACM International Conference on Information \& Knowledge Management}, 2022, pp. 57--66.

\bibitem{ijcai2022p266}
R.~Bao, X.~Wu, W.~Xian, and H.~Huang, ``Doubly sparse asynchronous learning for stochastic composite optimization,'' in \emph{Proceedings of the Thirty-First International Joint Conference on Artificial Intelligence, {IJCAI-22}}, 7 2022, pp. 1916--1922, main Track.

\bibitem{ndiaye2015gap}
E.~Ndiaye, O.~Fercoq, A.~Gramfort, and J.~Salmon, ``Gap safe screening rules for sparse multi-task and multi-class models,'' in \emph{Advances in Neural Information Processing Systems}, 2015, pp. 811--819.

\bibitem{ndiaye2017gap}
------, ``Gap safe screening rules for sparsity enforcing penalties,'' \emph{The Journal of Machine Learning Research}, vol.~18, no.~1, pp. 4671--4703, 2017.

\bibitem{wang2015safe}
J.~Wang and J.~Ye, ``Safe screening for multi-task feature learning with multiple data matrices,'' in \emph{Proceedings of the 32nd International Conference on International Conference on Machine Learning-Volume 37}, 2015, pp. 1747--1756.

\bibitem{zhou2015safe}
Q.~Zhou and Q.~Zhao, ``Safe subspace screening for nuclear norm regularized least squares problems,'' in \emph{International Conference on Machine Learning}, 2015, pp. 1103--1112.

\bibitem{lgorzata2013statistical}
M.~lgorzata Bogdana, E.~van~den Bergb, W.~Suc, and E.~J. Candesc, ``Statistical estimation and testing via the ordered l1 norm,'' 2013.

\bibitem{bao2019efficient}
R.~Bao, B.~Gu, and H.~Huang, ``Efficient approximate solution path algorithm for order weight l\_1-norm with accuracy guarantee,'' in \emph{2019 IEEE International Conference on Data Mining (ICDM)}.\hskip 1em plus 0.5em minus 0.4em\relax IEEE, 2019, pp. 958--963.

\bibitem{bauschke2011convex}
H.~H. Bauschke, P.~L. Combettes \emph{et~al.}, \emph{Convex analysis and monotone operator theory in Hilbert spaces}.\hskip 1em plus 0.5em minus 0.4em\relax Springer, 2011, vol. 408.

\bibitem{johnson2015blitz}
T.~Johnson and C.~Guestrin, ``Blitz: A principled meta-algorithm for scaling sparse optimization,'' in \emph{International Conference on Machine Learning}, 2015, pp. 1171--1179.

\bibitem{chang2011libsvm}
C.-C. Chang and C.-J. Lin, ``Libsvm: A library for support vector machines,'' \emph{ACM transactions on intelligent systems and technology (TIST)}, vol.~2, no.~3, pp. 1--27, 2011.

\bibitem{Dua:2019}
\BIBentryALTinterwordspacing
D.~Dua and C.~Graff, ``{UCI} machine learning repository,'' 2017. [Online]. Available: \url{http://archive.ics.uci.edu/ml}
\BIBentrySTDinterwordspacing

\bibitem{cai2005document}
D.~Cai, X.~He, and J.~Han, ``Document clustering using locality preserving indexing,'' \emph{IEEE Transactions on Knowledge and Data Engineering}, vol.~17, no.~12, pp. 1624--1637, 2005.

\bibitem{xu2016multi}
J.~Xu, J.~Liu, J.~Yin, and C.~Sun, ``A multi-label feature extraction algorithm via maximizing featurxue variance and feature-label dependence simultaneously,'' \emph{Knowledge-Based Systems}, vol.~98, pp. 172--184, 2016.

\bibitem{katakis2008multilabel}
I.~Katakis, G.~Tsoumakas, and I.~Vlahavas, ``Multilabel text classification for automated tag suggestion,'' in \emph{Proceedings of the ECML/PKDD}, vol.~18, 2008, p.~5.

\bibitem{bogdan2015slope}
M.~Bogdan, E.~Van Den~Berg, C.~Sabatti, W.~Su, and E.~J. Cand{\`e}s, ``Slope—adaptive variable selection via convex optimization,'' \emph{The annals of applied statistics}, vol.~9, no.~3, p. 667–698, 2015.

\bibitem{brzyski2019group}
D.~Brzyski, A.~Gossmann, W.~Su, and M.~Bogdan, ``Group slope--adaptive selection of groups of predictors,'' \emph{Journal of the American Statistical Association}, vol. 114, no. 525, pp. 419--433, 2019.

\bibitem{zhong2012efficient}
L.~W. Zhong and J.~T. Kwok, ``Efficient sparse modeling with automatic feature grouping,'' \emph{IEEE transactions on neural networks and learning systems}, vol.~23, no.~9, pp. 1436--1447, 2012.

\end{thebibliography}

\end{document}